\let\proof\@undefined
\let\endproof\@undefined
\newcommand{\vect}[1]{\boldsymbol{#1}}
\newcommand{\be}{\begin{equation}}
\newcommand{\ee}{\end{equation}}
\newcommand{\Pp}{\mathbb{P}}
\newcommand{\E}{\mathbb{E}}
\newcommand{\best}{\mathrm{best}}
\newcommand{\curr}{\mathrm{curr}}
\newcommand{\new}{\mathrm{new}}
\tikzstyle{startstop} = [rectangle, rounded corners, minimum width=3cm, minimum height=1cm,text centered, draw=black, fill=red!30]
\tikzstyle{io} = [trapezium, trapezium left angle=70, trapezium right angle=110, minimum width=3cm, minimum height=1cm, text centered, draw=black, fill=blue!30]
\tikzstyle{process} = [rectangle, minimum width=3cm, minimum height=1cm, text centered, draw=black, fill=orange!30]
\tikzstyle{decision} = [diamond, minimum width=3cm, minimum height=1cm, text centered, draw=black, fill=green!30]
\tikzstyle{arrow} = [thick,->,>=stealth]
\tikzstyle{decision} = [diamond, draw, fill=blue!20, 
\tikzstyle{block} = [rectangle, draw, fill=blue!20, 
\tikzstyle{line} = [draw, -latex']
\tikzstyle{cloud} = [draw, ellipse, text centered, text badly centered,fill=red!20, node distance=3cm,
\theoremstyle{definition}
\newtheorem{problem}{Problem}
\newtheorem{definition}{Definition}
\newtheorem{proposition}{Proposition}
\newtheorem*{remark*}{Remark}
\newtheorem{lemma}{Lemma}
\def\BibTeX{{\rm B\kern-.05em{\sc i\kern-.025em b}\kern-.08em
		T\kern-.1667em\lower.7ex\hbox{E}\kern-.125emX}}
\title{Bayesian Active Learning for Collaborative\\ Task Specification Using Equivalence Regions}
\author{Nils~Wilde,~\IEEEmembership{Student Member,~IEEE,}
	Dana~Kuli\'{c},~\IEEEmembership{Member,~IEEE,} 
	and~Stephen~L.~Smith,~\IEEEmembership{Senior Member,~IEEE}%
	\thanks{\textcopyright 2019 IEEE.  Personal use of this material is permitted.  Permission from IEEE must be obtained for all other uses, in any current or future media, including reprinting/republishing this material for advertising or promotional purposes, creating new collective works, for resale or redistribution to servers or lists, or reuse of any copyrighted component of this work in other works.}
	\thanks{Manuscript received September, 10, 2018; Revised December, 8, 2018; Accepted January, 13, 2019. This paper was recommended for publication by Editor Dongheui Lee upon evaluation of the Associate Editor and Reviewers' comments. This research is partially supported by the Natural Sciences and Engineering Research Council of Canada (NSERC) and OTTO Motors.}
	\thanks{N.~Wilde and S.~Smith are with the Department of Electrical and Computer Engineering, University of Waterloo, D.~Kuli\'c is with the University of Waterloo and Monash University. (\wilde; \kulic; \smith)}
%	\thanks{Digital Object Identifier (DOI): see top of this page.}
}
\begin{document}
	
	\maketitle
	
	\begin{abstract}
		Specifying complex task behaviours while ensuring good robot performance may be difficult for untrained users. We study a framework for users to specify rules for acceptable behaviour in a shared environment such as industrial facilities. As non-expert users might have little intuition about how their specification impacts the robot's performance, we design a learning system that interacts with the user to find an optimal solution. Using active preference learning, we iteratively show alternative paths that the robot could take on an interface. From the user feedback ranking the alternatives, we learn about the weights that users place on each part of their specification. We extend the user model from our previous work to a discrete Bayesian
		learning model and introduce a greedy algorithm for proposing alternative that operates on the notion of equivalence regions of user	weights. We prove that with this algorithm the revision active learning process converges on the user-optimal path. In simulations on realistic industrial environments, we demonstrate the  convergence and robustness
		of our approach.
	\end{abstract}
	
	%%%%%%%%%%%%%%%%%%%%%%%%%%%%%%%%%%%%%%%%%%%%%%%%%%%%%%%%%%%%%
	\section{Introduction}
	%%%%%%%%%%%%%%%%%%%%%%%%%%%%%%%%%%%%%%%%%%%%%%%%%%%%%%%%%%%%%	
	We address two active research topics in human-robot interaction (HRI): learning from non-expert users and human-robot collaboration. We develop a methodology for non-expert users to create specifications for complex robot tasks \cite{HR_collab_survey}. These specifications then enable humans and robots to operate in a shared workspace \cite{human_robot_proximity}.	
	
	For instance, in an industrial environment shared between humans, autonomous and human-operated vehicles, a facility operator might define road rules to be followed by both robot and human-operated vehicles.  Such road rules increase the predictability of robot behaviour for humans in the environment. These can include constraints such as areas of avoidance, one way roads or speed limits. The environment map, the operator specifications and a defined set of start and goal locations yield a complete specification of a robot task. In practice, designing such rules can be challenging, as operators might have little intuition about how their specification will affect the robot's behaviour and therefore the performance. They might be willing to accept the violation of less important constraints if sufficiently beneficial for task performance. For instance, Figure \ref{fig:example_spec} shows an industrial environment with several user defined constraints. When the robot uses the dark blue path, it respects all constraints. In the alternative solutions, the robot traverses (i.e.,~violates) constraints as this enables a significant reduction in the time to travel from start to goal.\\
	\begin{figure} 
		\centering
		\includegraphics[width=0.45\textwidth]{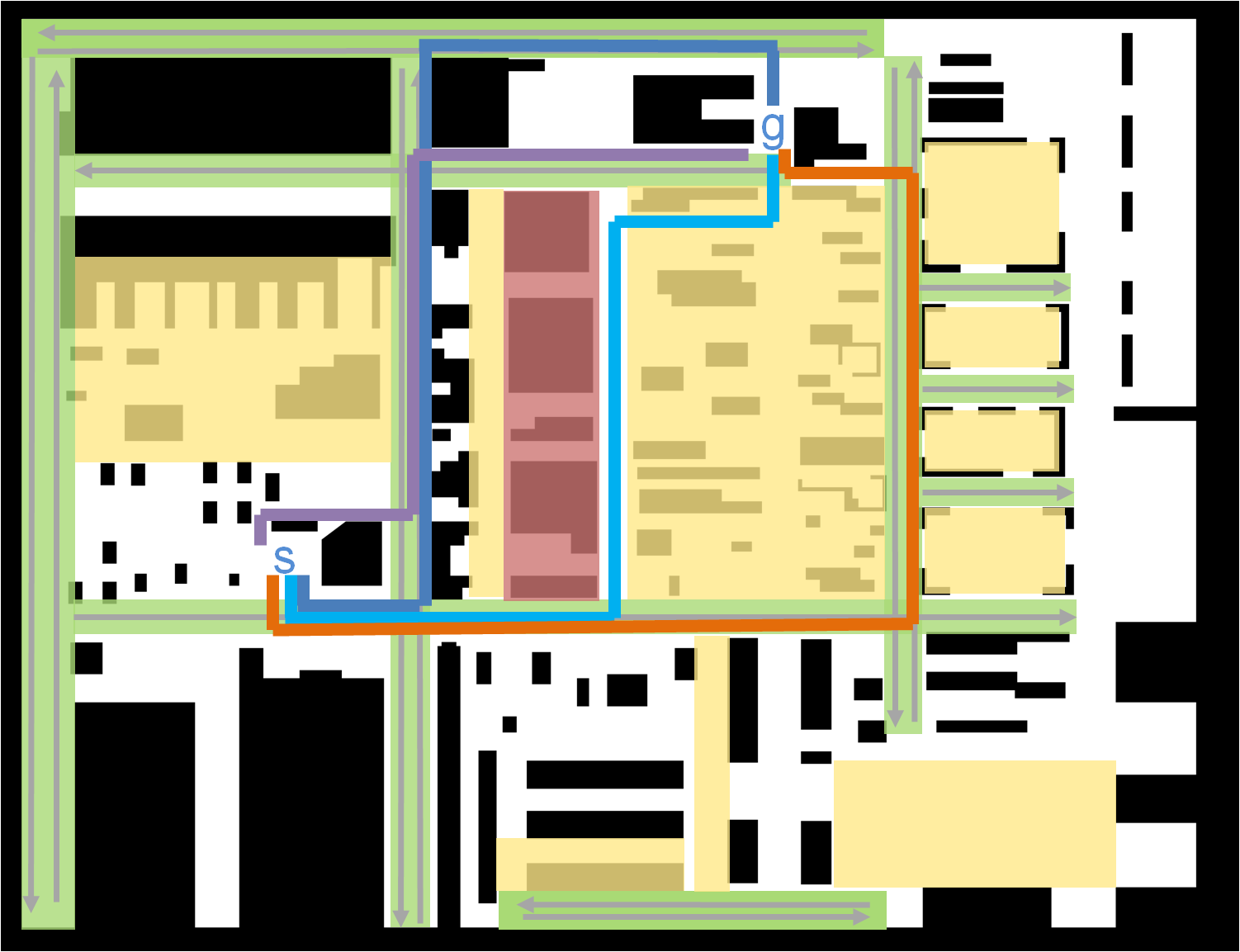}	
		\caption{
			Example environment (white) with obstacles (black) and user defined constraints. Roads are drawn in green with an arrow indicating the direction. Speed limit zones are drawn in yellow, while areas of avoidance are illustrated in red. 
			Further, four different paths between a given start and goal are shown.
			Dark blue indicates the initial path following the specification. Purple and orange paths are alternatives that the simulated user accepted during the interaction. In cyan we show the user optimal path $P^*$, to which the learning eventually converged.			
		}
		\label{fig:example_spec} 
	\end{figure}
	
	A user likely has a preference for which path is a better solution for the given task, based on the completion time and on the importance of the constraints. This can be captured via weights, describing the importance of constraints. However, asking the user to define these weights is unintuitive and possibly challenging. We propose a framework where a user provides only a spatial definition of constraints, while the importance of each constraint is latent.\\ 
	To fill the gap between this incomplete user input and a complete robot task description we apply active preference learning. We present the user with alternative solutions, i.e., paths, and ask for a ranking. In our framework, the user is ''on-the-loop'' and provides feedback at their convenience.	
	The latest  path preferred by the user becomes the current path and can already be executed. Consequently, each set of alternative paths contains the current path.
	Through this interaction with the user, the relative importance of each constraint can be learned. In our previous work \cite{ICRA2018_paper}, we developed an algorithm that iteratively builds a set of linear inequalities on the hidden weights of the user constraints.
	Our user model evaluated paths based on a cost function trading-off constraint violation and time. The learning system assumed that the user would always provide feedback consistent with that cost function and thus iteratively rejected paths that became inconsistent with the user feedback.
	We extend our previous work \cite{ICRA2018_paper} in two ways: First, the assumptions on the user are relaxed in order to capture more realistic behaviour. By considering noisy user feedback and introducing a probabilistic learning approach, we allow users to not always behave consistently with our user model.
	Second, based on our formulation of the problem as a shortest path search on a graph, we define equivalence regions for possible solutions. We propose a new learning system that exploits the notion of equivalence regions, which are sets of constraint weights that are indistinguishable to the user. From this we obtain a greedy algorithm that allows highly efficient learning, outperforming other state-of-the-art techniques.\\
	%%%%%%%%%%%%%%%%%%%%%%%%%%%%%%%%%%%%%%%%%%%%%%%%%%%%%%%%%%%%%
	\setcounter{secnumdepth}{0}
	\paragraph{Related work}
	\setcounter{secnumdepth}{3}
	Recently, active preference learning has been extensively studied for robot task specification. Thereby, a user is assumed to have an internal, hidden cost function which is learned from the user feedback to a presented set of alternatives. For instance, in \cite{activeRewardLearning} experts rank the demonstrated task performance for grasping applications while \cite{dragan_orig, dragan2} focus on continuous trajectories of dynamical systems like autonomous mobile robots.
	We propose a framework relying on a deterministic black-box planner that outputs a path for a given set of weights for the user constraints \cite{ICRA2018_paper}. Different weights can have the same optimal solution, allowing for a discretization of the weight space. 
	Therefore, our problem can be cast as an entity identification problem \cite{decision_tree_problem}: Our hypotheses are the sets of constraint weights that have different optimal solutions, tests correspond to asking the user about their preference between paths and observations equal their feedback. Golovin et al. \cite{EC2_learning_alg} introduce a strong algorithm for near-optimal Bayesian active learning with noisy observations. 
	However, their approach focuses on running each test at most once while we allow for repeated queries.
	While \cite{dragan_orig} greedily reduces the integral of the continuous probability density function over the weight space, our greedy algorithm is formulated over the discretized weight space corresponding to unique paths.
	A major drawback of the user model proposed in \cite{dragan_orig} and \cite{dragan2} is that the user's behaviour depends on the scale of the selected features as it considers an absolute instead of a relative error and does not provide a normalizing mechanism. We propose a more general, scale-invariant user model and show its robustness in simulations.
	Finally, our work differs from other applications of active preference learning for robotics in the way we choose the features for the cost function. Usually, features are picked manually and are therefore a design choice for the learning system \cite{activeRewardLearning, CLAUS}. In our case, features are the violations of constraints that follow from the user specification and are user specific.	
	
	A common technique for learning from demonstration \cite{LfD_trajecotires} is inverse reinforcement learning (IRL). The optimal behaviour of a dynamical system is described by a hidden reward function. IRL then learns this reward function by observing optimal demonstrations. Similarly, we assume a hidden user cost/reward function for the quality of a path. The cost function is modelled as a weighted sum of predefined features and we are interested in learning the weights. However, providing demonstrations might be difficult \cite{activeRewardLearning}, the amount of necessary demonstrations may be prohibitively large \cite{RL_human_pref_pairwise} or demonstrations may require a high level of expertise \cite{learn_traj_prefs}. Providing rich and precise specifications prior to a robot executing a task can be challenging and more prone to inaccuracies \cite{IRL_apprentice_learning}. 
	In contrast, active preference learning learns hidden reward functions by proposing alternative solutions and asking for the user's preference.	
	Closely related to this work, \cite{Alex_GUI} presents a GUI for specifying the user constraints on a given environment, which is used as a front-end to the work presented in this paper.\\
	%%%%%%%%%%%%%%%%%%%%%%%%%%%%%%%%%%%%%%%%%%%%%%%%%%%%%%%%%%%%%	
	\paragraph{Contributions}
	In our previous work \cite{ICRA2018_paper}, we proposed a deterministic user model for learning about weights from a ranking feedback and proposed a complete algorithm.
	Using the same framework for combining path planning with user constraints, we extend the user model. To capture user feedback inconsistent with our assumed cost function, we propose a Bayesian learning approach (Section \ref{sec:bayes_learn}). Thereby, we exploit the discrete properties of our problem, introducing a partitioning of the solution space based on equivalence regions. We prove almost sure convergence of the algorithm (Section \ref{sec:prob_alg}) and derive a greedy approach (Section \ref{sec:greedy_policy}). Finally, we show the performance and robustness of our approach in comparison with another state-of-the-art technique in extensive simulations (Section \ref{sec:eval}).
	
	%%%%%%%%%%%%%%%%%%%%%%%%%%%%%%%%%%%%%%%%%%%%%%%%%%%%%%%%%%%%%
	%%%%%%%%%%%%%%%%%%%%%%%%%%%%%%%%%%%%%%%%%%%%%%%%%%%%%%%%%%%%%
	\section{Problem Formulation}
	\label{chap:prob}
	%%%%%%%%%%%%%%%%%%%%%%%%%%%%%%%%%%%%%%%%%%%%%%%%%%%%%%%%%%%%%	
	\subsection{Preliminaries}	
	
	Using definitions from \cite{CO_Korte}, a multi-graph is a triple $G=(V, E, \Psi)$, where the function $\Psi: E\rightarrow \{(v,w) \in V \times V: v  \neq w\}$ associates each edge with an ordered pair of vertices. Multiple edges are allowed to connect the same ordered pair of vertices and are then called parallel.
	In our problem we consider doubly weighed multi-graphs of the form $G=(V,E,\Psi, c_1, c_2)$. Thereby, $c_1$ and $c_2$ are independent weight functions, each associating a real number to each edge of the graph: $c_i:E(G)\rightarrow\mathbb{R}$ for $i\in\{1,2\}$.
	
	A walk between two vertices $v_1$ and $v_{k+1}$ on a graph $G$ is a finite sequence of vertices and edges $v_1, e_1, v_2, e_2, \dots , e_k,v_{k+1}$ where $e_1,e_2,\dots e_k$ are distinct.
	A path $P_{v_1,v_{k+1}}$ between two vertices $v_1$ and $v_{k+1}$ is defined as a graph $(\{v_1, v_2,\dots,v_{k+1}\},\{e_1, e_2,\dots,e_{k}\}) $ 
	where $ v_1, e_1, v_2, e_2, \dots , e_k,v_{k+1}$ is a walk. On a weighted graph, the cost of a path is defined as $c(P)=\sum_{e\in P} c(e)$. In doubly weighted graphs we define two costs $c_1$ and $c_2$ where $c_i(P)=\sum_{e\in P} c_i(e)$ for $i\in\{1,2\}$.\\

	%%%%%%%%%%%%%%%%%%%%%%%%%%%%%%%%%%%%%%%%%%%%%%%%%%%%%%%%%%%%%
	\subsection{Problem statement}
	\label{sec:probState}
	%%%%%%%%%%%%%%%%%%%%%%%%%%%%%%%%%%%%%%%%%%%%%%%%%%%%%%%%%%%%%
	%	
	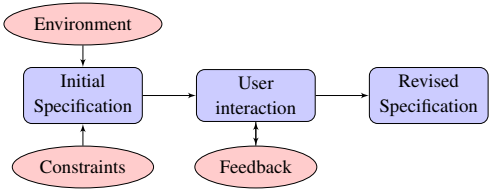
\begin{figure} 
		\centering
		\resizebox {0.75\columnwidth} {!} {
			\begin{tikzpicture}[auto]
			% Place nodes
			\tikzset{node distance = 2.9cm and 0.2cm}
			\node [block] (init) {\small Initial\\ Specification};
			\node [block, right of=init] (learn) {\small User interaction};
			\node [block, right of=learn] (final) {\small Revised\\ Specification};
			\node [cloud, above of=init, yshift=-1.8cm] (env) {\small Environment};
			\node [cloud, below of=init, yshift=1.8cm] (user1) {\small Constraints};
			\node [cloud, below of=learn, yshift=1.8cm] (user2) {\small Feedback};			
			% Draw edges
			\path [line] (env) -- (init);
			\path [line] (user1) -- (init);
			\path [line] (init) -- (learn);
			\path [line] (learn) -- (final);
			\path [line][bend right=10] (user2) -- (learn);
			\path [line][bend right=10] (learn) -- (user2);
			\end{tikzpicture}
		}
		\caption{
			Flowchart of the problem. An initial specification is revised using interactive learning to obtain a revision that better fits the user preferences.}
		\label{fig:flowchart} 
	\end{figure}
	
	We summarize the problem setup in Figure \ref{fig:flowchart}. From the environment and a set of user constraints we construct an initial specification for the robot. As users might allow the violation of some of their constraints for sufficient time benefit, we present them with alternative paths during the interaction and ask for feedback. From this feedback we learn the weights of the constraints and obtain a revised specification that corresponds to the user preferences. Figure \ref{fig:example_spec} illustrates how the path that we believe to be optimal evolves after observing user feedback until it converges to the optimal solution.	
	
	As in our previous work \cite{ICRA2018_paper}, we consider a fully known, static environment, represented as a weighted strongly connected multigraph $G'=(V,E,\Psi,t)$. The weight $t$ on the graph encodes the time a robot requires to traverse an edge. We use parallel edges with different times to model speed.
	A robot task consists of navigating from a start vertex $v_{\mathrm{start}}$ to a goal vertex $v_{\mathrm{goal}}$ on $G'$.
	On the environment, a user specifies a set $\Gamma$ containing $d$ constraints. Each constraint is a pair $(E_i, w_i^*)$, where $E_i$ is a subset of the edges of $G'$ and $w_i^*$ is a hidden user cost for the constraint. 
	To incorporate the user specification, we create a doubly weighted graph $G=(V,E, \Psi, t, w^*)$. For each edge $e$ in $G$ the second weight $w^*(e)$ is defined as the sum of all $w_i^*$ that belong to a constraint containing $e$.
	The problem is to find a path from  $v_{\mathrm{start}}$ to $v_{\mathrm{goal}}$ that minimizes the following objective:
	\be
	\min_P\sum_{e\in P}w^*(e)+t(e).
	\label{eq:objective}
	\ee
	The true user weights $w^*_i$ are latent. Moreover, they are defined in units of time, allowing us to pose the multi-objective optimization as an unweighted sum. To learn about the weights, we can query the user by presenting them with a set of paths $\{P^0, P^1,\dots P^k\}$. The feedback is a vector $\vect{u}\in \mathbb{R}^k$ representing a ranking, i.e., a partial ordering, of the presented paths. Without loss of generality we focus only on pair-wise comparisons, as the ranking of additional elements can be expressed with a set of pair-wise relations. This is also well motivated with respect to the user; ranking more than two alternatives might be unnecessarily challenging \cite{pairwise_active_learning}.
	
	We formally define the \emph{Learning of User Preferences} (LUP) problem as follows:
	\begin{problem}[LUP]
		\label{prob:LUP}
		Given a graph $G'$, a user specification $\Gamma$, a start and goal vertex, a user evaluating a presented set of paths and a budget of iterations for querying the user, maximize the belief about the true user weights ${w}_i^*$ and its corresponding shortest path $P^*$ with respect to equation \eqref{eq:objective}.
	\end{problem}
	
	%%%%%%%%%%%%%%%%%%%%%%%%%%%%%%%%%%%%%%%%%%%%%%%%%%%%%%%%%%%%%
	\section{Probabilistic Learning}
	\label{sec:prob_model}
	%%%%%%%%%%%%%%%%%%%%%%%%%%%%%%%%%%%%%%%%%%%%%%%%%%%%%%%%%%%%%
	In this section we propose a probabilistic model of user behaviour. In our previous work \cite{ICRA2018_paper} we required the user to always provide feedback consistent with a linear user model. In contrast, we now consider that the user feedback may be noisy and thus the user feedback is not deterministic.
	
	%%%%%%%%%%%%%%%%%%%%%%%%%%%%%%%%%%%%%%%%%%%%%%%%%%%%%%%%%%%%%	
	\subsection{Bayesian Learning}
	\label{sec:bayes_learn}
	%%%%%%%%%%%%%%%%%%%%%%%%%%%%%%%%%%%%%%%%%%%%%%%%%%%%%%%%%%%%%
	Using definitions for Bayesian inference from \cite{statisticsBook} we set up a learning model for gaining information about the hidden (latent) parameter $\vect{w}$. We model the user weights to be positive and finite: $w_i \in [0, w^{\max}]$. 
	The cost of a path $P$ is $C(P)=\vect{\phi}\vect{w}^*+t$, where the violation vector $\vect{\phi}$ describes how many edges of each constraint are traversed by $P$, $\vect{w}^*$ is a column vector containing all latent user weights and $t$ is the time to traverse $P$.
	From each user feedback for a pair $(P^i, P^j)$ we can derive a hyperplane of the form $(\vect{\phi}^i-\vect{\phi}^j)\vect{w}= t^j-t^i$. This hyperplane defines two subsets of the weight space, $\Lambda^{ij}$ and $\Lambda^{ji}$, where $\Lambda^{ij}=\{\vect{w}\in[0,w^{\max}]^d| (\vect{\phi}^i-\vect{\phi}^j)\vect{w}\leq t^j-t^i\}$. Thus $\Lambda^{ij}$ is the set of all weights for which $P^i$ has lower cost than $P^j$.
	%%%%%%%%%%%%%%%%%%%%%%%%%%%%%%%%%%%%%%%%%%%%%%%%%%%%%%%%%%%%%	
	\paragraph{Probabilities of halfpspaces}
	For any pair of paths $(P^i, P^j)$, the parameter $\vect w^*\in\Lambda^{ij}$ holds iff $C(P^i)\leq C(P^j)$.
	Adopting a Bayesian perspective, we treat $\vect w^*\in\Lambda^{ij}$ as a random variable and assign an uninformed prior $\Pp(\vect w^*\in\Lambda^{ij})=\nicefrac{1}{2}$. Notice that the volumes of $\Lambda^{ij}$ and $\Lambda^{ji}$ do not correspond to the probability of a path being preferred over another path. 
	From the user feedback about two paths $P^i$ and $P^j$ we obtain binary observations. We denote observations with a random variable $U^{ij}$, indicating whether the user prefers path $P^i$ or $P^j$. A deterministic user always provides feedback where $U^{ij}=1 \iff C(P^i)\leq C(P^j)$, i.e., $U^{ij}=1 \iff \vect{w}^*\in\Lambda^{ij}$. A probabilistic user is consistent with this model with some probability $p^{ij}$. Hence, the probability of $U^{ij}$ given $\vect w^*\in\Lambda^{ij}$ is
	\be
	\begin{aligned}
		&\Pp(U^{ij}=1|\vect w^*\in\Lambda^{ij})=&	
		{p}^{ij} \\
		&\Pp(U^{ij}=1|\vect w^*\notin\Lambda^{ij})=&	
		1-{p}^{ij}.
	\end{aligned}
	\label{eq:observation_model}
	\ee
	We refer to $p^{ij}$ as the accuracy of the user and assume $p^{ij}>\nicefrac{1}{2}$, i.e., that our user model fits the user's decision making better than a random guess. If the parameter $p^{ij}$ is hidden, we can evaluate equation \eqref{eq:observation_model} with an estimate $\hat p$. 
	To simplify notation we write $\Pp(U^{ij} = 1)$ as $\Pp(U^{ij})$.
	In general, $p^{ij}$ is a function of $P^i$ and $P^j$. This allows us to model different levels of the user's accuracy depending on how similar the paths are.
	%%%%%%%%%%%%%%%%%%%%%%%%%%%%%%%%%%%%%%%%%%%%%%%%%%%%%%%%%%%%%	
	\paragraph{Probabilities of equivalence regions}
	Equation \eqref{eq:observation_model} describes an observation model for a pair of paths, assigning probabilities to halfspaces. We now assign probabilities to paths instead.
	We observe that not every value in the weight space leads to a unique shortest path, which leads to our definition of equivalence regions. 
	\begin{definition}[Equivalence region]
		If the same path is optimal for two weights ${\vect w}^i$ and ${\vect w}^j$, we call ${\vect w}^i$ and ${\vect w}^j$ \emph{equivalent}. An \emph{equivalence region} of a weight ${\vect w}^i$ is then the set of all  weights that are \emph{equivalent} to the weight: $\Omega(\vect w^i)=\{\vect w^j \in \mathbb{R}^n_{\geq0}|\vect w^j \text{ is \emph{equivalent} to } \vect w^i\}$. 
	\end{definition}	
	We can use equivalence regions to discretize the weight space $[0, w^{\max}]^d$.
	Given a comparison of two paths $(P^i, P^j)$, we introduce a second observation model that describes the probability of user feedback given that the true user weight $\vect{w}^*$ lies in the equivalence region $\Omega'$ of some path $P'$ as
	\be
	\begin{aligned}
		\Pp(U^{ij}|\vect{w}^*\in \Omega')
		=
		\begin{cases}
			p^{ij}, \quad \text{if } \Omega'\subseteq \Lambda^{ij}\\
			1-p^{ij}, \quad \text{if } \Omega'\subseteq \Lambda^{ji}\\
			\nicefrac{1}{2}, \quad \text{otherwise.}
		\end{cases}
	\end{aligned}
	\label{eq:observation_paths}
	\ee
	If an equivalence region lies in both halfspaces $\Lambda^{ij}$ and $\Lambda^{ji}$, we obtain no information from the feedback $U^{ij}$, since not all weights in $\Omega'$ are either feasible or infeasible with the user feedback; expressed in the third case. Let $\mathbb{O}$ be the set of all equivalence regions for a given problem instance. The observation model allows us to express a probability for $\vect{w}^*\in \Omega'$ given an observation $U^{ij}$ as a Bayesian posterior
	\be
	\begin{aligned}
		\Pp(\vect{w}^*\in \Omega'|U^{ij}) 
		= 
		\frac{
			\Pp(U^{ij}|\vect{w}^*\in \Omega')
			\Pp(\vect{w}^*\in \Omega')
		}{
			\underset{\Omega\in \mathbb{O}}{\sum}\;		
			\Pp(U^{ij}|\vect{w}^*\in \Omega)
			\Pp(\vect{w}^*\in \Omega)
		}.
		\label{eq:posterior}
	\end{aligned}
	\ee
	
	Following \cite{statisticsBook}, we write the Bayesian posterior for a series of $n$ observations $U$ for arbitrary pairs of paths as 
	
	\be
	\begin{aligned}
		\Pp(\vect{w}^*\in \Omega'|U) 
		= 
		\frac{
			\underset{U^{ij}\in U}{\prod}
			\Pp(U^{ij}|\vect{w}^*\in \Omega')
			\Pp(\vect{w}^*\in \Omega')
		}{
			\underset{\Omega\in \mathbb{O}}{\sum}\;	
			\underset{U^{ij}\in U}{\prod}
			\Pp(U^{ij}|\vect{w}^*\in \Omega)
			\Pp(\vect{w}^*\in \Omega)
		}.
		\label{eq:post_prob_eq_region}
	\end{aligned}
	\ee
	
	\begin{remark*}
		Notice that this general model does not depend on the exact form of the likelihoods $p^{ij}$, we only require $p^{ij}>\nicefrac{1}{2}$. Therefore, our model could use the likelihood function from \cite{dragan2}. Alternatively, one could fix all $p^{ij}$ to a constant. Then, in contrast to \cite{dragan_orig, dragan2}, the accuracy of the user does not depend on the scaling of the features in the cost function. 
		Moreover, our model increases the robustness towards user feedback that appears inaccurate because the user is considering context that is not described by our features. For instance, in a warehouse an operator might have different preferences for different weekdays or wants a robot to temporarily avoid certain regions. This can not be covered with the current cost function and thus this user would appear erratic to the learner. Finally, when the accuracy is set to one, the deterministic learning model \cite{ICRA2018_paper} is recovered.
		The key advantage of using equivalence regions in equation \eqref{eq:post_prob_eq_region} is that it reduces the complexity of the probability distribution since we now have a discrete distribution over regions rather than a continuous one 
		%over the entire weight space. 
		This allows for a significantly faster solving of the problem, as we will show in Section \ref{sec:eval}.
	\end{remark*}
	
	%%%%%%%%%%%%%%%%%%%%%%%%%%%%%%%%%%%%%%%%%%%%%%%%%%%%%%%%%%%%%	
	\subsection{Probabilistic Algorithm}
	\label{sec:prob_alg}
	%%%%%%%%%%%%%%%%%%%%%%%%%%%%%%%%%%%%%%%%%%%%%%%%%%%%%%%%%%%%%
	In Algorithm \ref{Alg:generalAlg_prob} we propose a general procedure to iteratively learn about user preferences from pairwise user feedback with inaccurate users.
	Initially, we compute the set of all equivalence regions $\mathbb{O}$ (line 2). After updating our current belief about all equivalence regions (line 4),  we iteratively generate new paths (line 5) similar to our deterministic algorithm from \cite{ICRA2018_paper}. Then, we request user feedback for the pair $(P^{\curr},P^{\new})$ and add the user feedback to a set (6-7). After adding the new observation to our set, we update the weight space and, if necessary, the current weight (8-9). The procedure is repeated until we reach the iteration budget $N$ in line 2, at the end we return the weight $\vect{\hat{w}}^{\best}$ where the posterior belief is maximized.
	We discuss an implementation of the function $\mathtt{getNewPath}(\cdot)$ in Section \ref{sec:greedy_policy}.
	
	\begin{algorithm}[h]	
		\DontPrintSemicolon % Some LaTeX compilers require you to use \dontprintsemicolon instead
		\KwIn{$G'$, $\Gamma$, $N$}
		\KwOut{$\hat{\vect w}^{\curr}$}			
		
		$\hat{\vect w}^{\curr}=\vect w^{\max}$, $U=\emptyset$\\ Calculate $\mathbb{O}$\\
		\For{$n=1$ to $N$} {
			Update $\Pp(\vect{w}^*\in \Omega'|U) $ for all $\Omega'\in\mathbb{O}$\\
			
			$P(\hat{\vect w}^{\new})
			\leftarrow 
			\mathtt{getNewPath}(
			\mathbb{O},
			\hat{\vect w}^{\curr},\quad\quad\quad\quad\quad$    
			$\{\Pp(\vect{w}^*\in \Omega^1|U),\Pp(\vect{w}^*\in\Omega^2|U),\dots\}\;
			)$\\
			
			Get user feedback $U^{\curr,\new}$ for paths $P(\hat{\vect w}^{\curr})$ and $P(\hat{\vect w}^{\new})$\\	
			
			$U=U\cup U^{\curr,\new}$\\	
			
			\If{$U^{\curr,\new}=\new$}
			{$\hat{\vect w}^{\curr}=\hat{\vect w}^{\new}$}
			
		}	
		\Return{
			$\vect{\hat{w}}^{\best} = \underset{\vect{{w}}'| \Omega' \in \mathbb{O}}{\arg \max}\; \Pp(\vect{w}^*\in \Omega'|U)$
		}\;
		
		\caption{Learning user weights by sampling}
		\label{Alg:generalAlg_prob}
	\end{algorithm}
	
	%%%%%%%%%%%%%%%%%%%%%%%%%%%%%%%%%%%%%%%%%%%%%%%%%%%%%%%%%%%%%	
	\paragraph{Convergence}
	We now establish almost surely convergence of Algorithm \ref{Alg:generalAlg_prob}. Let $\vect{w}^*$ be the true user weight and $p^{ij}>0.5$ for all pairs of paths $(P^i,P^j)$. Without loss of generality, we only consider $i,j$ pairs that are ordered such that $C(P^i)\leq C(P^j)$; hence $\vect{w}^*\in\Lambda^{ij}$ always hold (but this is not known to the algorithm). 
	Moreover, $l$ is the number of equivalence regions in $\mathbb{O}$ and $m$ the number of all pair-wise comparisons. 
	For the following definition we change our notation and denote the optimal path $P^*$ as $P^1$.
	\begin{definition}[Asymptotically completely informative sequence]
		\label{def:informative}	
		Let $X_n$ be a sequence of pairs of paths presented to the user in $n$ iterations, and for each $j$, let $X^{1j} = \{X^{1r_1}_1 , X^{1r_2}_2, X_{n_j}^{1 r_{n_j} }\}$ be the longest subsequence of	$X$ for which $\Omega^j \subseteq \Lambda^{1r_1} , \Omega^j \subseteq \Lambda^{1r_2} , . . . , \Omega^j \subseteq \Lambda^{1 r_{n_j}}$.  Then the sequence is asymptotically completely informative if as $n$ goes to $\infty$, we have $n_{j} \to \infty$ for all $j$.	
	\end{definition}
	In other words, if a sequence of pairs of paths contains observations about every $(P^*,P^j)$, and the number of observations for each $(P^*,P^j)$ pair goes to infinity as the length of the sequence goes to infinity, it is called \emph{asymptotically completely informative}.
	Notice that such a sequence of paths is not required to contain subsequences for all pairs of paths $(P^*, P^j)$; it is sufficient if the feedback to the pairs $(P^*, P^j)$ contains information about all other equivalence regions according to \eqref{eq:observation_paths}.
	Finally, we call $U$ \emph{asymptotically completely informative} if the corresponding sequence of paths is \emph{asymptotically completely informative} and treat the probability that the true user weight $\vect{w}^*$ or an estimate $\vect{\hat{w}}$ lie in an equivalence region $\Omega$ as a random variable.	
	
	\begin{proposition}[Convergence]
		\label{prop:prob_convergence}
		Let $\Omega^*$ be the equivalence region containing $\vect{w}^*$. Given \emph{asymptotically completely informative} user feedback $U$, the probability that the best estimate $\vect{\hat{w}}^{\best}$ of Algorithm 1 lies in $ \Omega^*$ converges almost surely to 1 as all $n_j$ go to infinity:
		\be
		\Pp(\vect{\hat{w}}^{\best}\in \Omega^*|U)=1,\; n_j\to\infty,\; 
		\text{for all }j.
		\ee	
	\end{proposition}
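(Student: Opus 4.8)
First I would show that the Bayesian posterior over the finite family $\mathbb{O}$ of equivalence regions ($|\mathbb{O}|=l$) concentrates all its mass on $\Omega^*$, and then read off the claim: once $\Pp(\vect{w}^*\in\Omega^*\mid U)>\tfrac12$, the maximiser $\vect{\hat{w}}^{\best}$ returned by Algorithm~\ref{Alg:generalAlg_prob} necessarily lies in $\Omega^*$. Because $\mathbb{O}$ is finite, concentration follows once I show that for each competing region $\Omega^j\neq\Omega^*$ the posterior odds $\Pp(\vect{w}^*\in\Omega^j\mid U)/\Pp(\vect{w}^*\in\Omega^*\mid U)$ tend to $0$ almost surely, since $\Pp(\vect{w}^*\in\Omega^*\mid U)=\bigl(1+\sum_{j}\Pp(\vect{w}^*\in\Omega^j\mid U)/\Pp(\vect{w}^*\in\Omega^*\mid U)\bigr)^{-1}$. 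By the product form \eqref{eq:post_prob_eq_region}, this ratio equals a fixed positive prior ratio times $\exp(S_n^j)$, where $S_n^j=\sum_{k=1}^{n}Y_k^j$ and $Y_k^j=\log\frac{\Pp(U_k\mid\vect{w}^*\in\Omega^j)}{\Pp(U_k\mid\vect{w}^*\in\Omega^*)}$ is the log-likelihood ratio of the $k$-th observation $U_k$; so everything reduces to proving $S_n^j\to-\infty$ almost surely.

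Next I would realise $S_n^j$ as a process with bounded increments and nonpositive conditional means. Taking the feedback to be consistent with the model, so that the conditional law of $U_k$ given the history $\mathcal{F}_{k-1}$ (which already fixes the $k$-th queried pair) is $\Pp(\cdot\mid\vect{w}^*\in\Omega^*)$, one gets $\E[Y_k^j\mid\mathcal{F}_{k-1}]=-D_{\mathrm{KL}}\bigl(\Pp(\cdot\mid\vect{w}^*\in\Omega^*)\,\|\,\Pp(\cdot\mid\vect{w}^*\in\Omega^j)\bigr)=:-d_k^j\le0$; moreover $d_k^j=0$ exactly when the $k$-th pair assigns $\Omega^j$ and $\Omega^*$ equal likelihoods, in which case $Y_k^j\equiv0$ too, so uninformative rounds add nothing to $S_n^j$ or to its accumulated drift. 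Finiteness of $G'$ gives finitely many paths, pairs, and accuracies $p^{ab}\in(\tfrac12,1)$, so the $Y_k^j$ are uniformly bounded and every nonzero $d_k^j$ is at least a single constant $c>0$ (a Kullback--Leibler divergence between two distinct Bernoulli laws drawn from a finite list). Since $\Omega^*\subseteq\Lambda^{1j}$ while $\Omega^j\subseteq\Lambda^{j1}$ ($P^*=P^1$ is optimal on $\Omega^*$ and $P^j$ on $\Omega^j$), every query of the pair $(P^*,P^j)$ separates $\Omega^j$ from $\Omega^*$ by \eqref{eq:observation_paths}; hence the asymptotically-completely-informative hypothesis of Definition~\ref{def:informative}, which drives the number of observations about each $(P^*,P^j)$ to infinity, forces the accumulated drift $A_n^j:=\sum_{k\le n}d_k^j$ to diverge to $+\infty$ for every $\Omega^j$.

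I would then conclude by martingale concentration. Writing $S_n^j=M_n^j-A_n^j$ with $M_n^j:=\sum_{k\le n}\bigl(Y_k^j-\E[Y_k^j\mid\mathcal{F}_{k-1}]\bigr)$, the process $M_n^j$ is a martingale whose nonzero increments (finitely or infinitely many) are bounded, while $A_n^j\to\infty$; Azuma--Hoeffding together with Borel--Cantelli --- equivalently, a martingale strong law of large numbers --- gives $M_n^j/A_n^j\to0$ almost surely, whence $S_n^j=A_n^j\bigl(M_n^j/A_n^j-1\bigr)\to-\infty$ almost surely. A union bound over the finitely many $\Omega^j\neq\Omega^*$ then yields $\Pp(\vect{w}^*\in\Omega^*\mid U)\to1$ almost surely as all $n_j\to\infty$, and since the $\arg\max$ in Algorithm~\ref{Alg:generalAlg_prob} eventually picks a weight in $\Omega^*$, the proposition follows.

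The step I expect to be hardest is the probabilistic limit in the third paragraph: the observation stream is not i.i.d. --- the queried pairs are chosen adaptively by $\mathtt{getNewPath}$ and need not repeat --- so the ordinary strong law is unavailable and the argument must run through a bounded-increment (super)martingale, and one has to verify carefully that the informativeness condition of Definition~\ref{def:informative} really does force the accumulated drift $A_n^j$ to diverge for every competing region. The remaining points are lower order: the uniform bound $c>0$ on per-observation divergences genuinely needs both finiteness of the instance and $p^{ij}$ strictly inside $(\tfrac12,1)$; the boundary case $p^{ij}\to1$ only drives the relevant likelihood ratio to $0$ or $-\infty$ and so makes convergence faster; and if one prefers the halfspace model \eqref{eq:observation_model} to the assumption that the conditional law of $U_k$ is $\Pp(\cdot\mid\vect{w}^*\in\Omega^*)$, a small extra argument is needed for query pairs whose separating hyperplane passes through $\Omega^*$, though never for pairs involving $P^*$, on which Definition~\ref{def:informative} rests.
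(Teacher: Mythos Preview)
Your argument is correct and follows the classical route to Bayesian posterior consistency: reduce to posterior odds, write the log-likelihood ratio as a predictable drift plus a bounded-increment martingale, and invoke a martingale strong law (or Azuma--Hoeffding with Borel--Cantelli) to send the drift to $-\infty$. The paper takes a more elementary and less general path. It fixes one pair $(P^*,P^j)$ at a time, applies Hoeffding's inequality directly to the binomial count $k$ of accurate responses in the length-$n$ subsequence of queries about that pair to get $k>n/2$ eventually, substitutes into the closed-form two-hypothesis posterior $\Pp(\vect w^*\in\Lambda^{*j}\mid U)=\bigl(1+(\hat p/(1-\hat p))^{n-2k}\bigr)^{-1}$, and reads off the limit $1$; it then combines the per-pair statements via $\bigcap_{j}\Lambda^{*j}=\Omega^*$ and a union bound. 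The paper's approach is shorter and avoids martingale machinery by exploiting that, restricted to a fixed query pair, the responses are i.i.d.\ Bernoulli; the cost is that the adaptivity of the query stream and the passage from halfspace posteriors to the equivalence-region posterior of \eqref{eq:post_prob_eq_region} are treated informally. Your approach handles adaptivity cleanly through the filtration, works directly with the posterior actually used by Algorithm~\ref{Alg:generalAlg_prob}, and, as you note in your final paragraph, surfaces the one genuine subtlety---queries whose separating hyperplane cuts through $\Omega^*$---which the paper does not discuss but which is harmless for the queries guaranteed by Definition~\ref{def:informative} since those always involve $P^*$.
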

	
	\begin{proof}
		At first consider the comparison of an arbitrary pair $(P^i,P^j)$ and fix $P^i=P^*$. Let $U$ be a sequence of user feedback of length $n^{ij}$and $k^{ij}$ be the number of times the user chooses $P^i$, i.e., chooses accurately. Moreover, let $\hat{p}^{ij}$ be our estimate of $p^{ij}$. For simplification we drop the ${ij}$ superscript. 
		From $p>\nicefrac{1}{2}$, we can conclude that $k>\nicefrac{n}{2}$ as $n\to\infty$, using Hoeffding's inequality \cite{hoeffding}. 
		We notice that the probability for a sequence of user feedback given $\vect w^*\in\Lambda^{ij}$ depends on $p$, while our belief about $\vect w^*\in\Lambda^{ij}$ given some user feedback is based on $\hat{p}$. 
		Given user feedback $U$ with known $n$ and $k$, the posterior probability is	
		\be
		\begin{aligned}
			\Pp	(\vect w^*\in\Lambda^{ij}|U)
			=&
			\frac{{\hat{p}^k (1-\hat{p})^{n-k}}}
			{{\hat{p}^k (1-\hat{p})^{n-k}}+{\hat{p}^{n-k} (1-\hat{p})^k}}\\
			=&\frac{1}
			{1+\frac{\hat{p}}{1-\hat{p}}^{n-2k}}.		
		\end{aligned}
		\ee	
		We take the limit as $n$ goes to $\infty$:
		\be
		\begin{aligned}
			\lim\limits_{n\to\infty}
			\Pp(\vect w^*\in\Lambda^{ij}|{U})=
			1=
			\lim\limits_{n\to\infty}
			\frac{1}
			{1+\frac{\hat{p}}{1-\hat{p}}^{n-2k}}.	
			\\		
		\end{aligned}
		\label{eq:proof_limit}
		\ee
		
		Using $\hat{p}>\nicefrac{1}{2}$ leads to $\frac{\hat{p}}{1-\hat{p}}>1$. The term ${n-2k}$ is strictly negative if $k>\nicefrac{n}{2}$. Hence, $\nicefrac{\hat{p}}{(1-\hat{p})}^{n-2k}$ approaches zero as $n$ goes to infinity. We conclude that $\lim\limits_{n\to\infty}	\Pp(\vect w^*\in\Lambda^{ij}|{U})=1$.
		As we only have two paths, we only have two equivalence regions. Following our ordering of $i$ and $j$, $\Omega^*=\Lambda^{ij}$. Hence, $\Pp(\vect{\hat{w}}\in \Omega^*|{U})=1,$ as $ n^{ij}\to\infty$ for a single, fixed pair of paths $P^i$ and $P^j$.\\	
		Finally, we extend the result to comparisons for multiple pairs. Equation \eqref{eq:post_prob_eq_region} expresses the probability of $\vect{w}^*$ lying in a given equivalence region.
		Notice that ${\bigcap}_{j\neq i}\Lambda^{ij}\subseteq\Omega^*$, as well as $\lim\limits_{n\to\infty}\Pp({\vect w^*\in\Lambda^{ij}}|{U})=1$.	
		As $\Pp(\vect{w}^{\best}\in\Lambda^{ij}|{U})\to1$ for all $j\neq i$, we have $\Pp(\vect{w}^{\best}\in{\bigcap}_{j\neq i}\Lambda^{ij}|U)\to1$. 
		Hence, $\Pp(\vect{\hat{w}}^{\best}\in \Omega^*|{U})\to 1$  and the statement holds.
	\end{proof}
	
	From Proposition \ref{prop:prob_convergence} we conclude that Algorithm \ref{Alg:generalAlg_prob} always elicits the true user weight if an \emph{asymptotically completely informative} sequence of paths is presented to the user for feedback, and if the user's accuracy with respect to our model is greater than $\nicefrac{1}{2}$. However this does not include any guarantees on the speed of convergence. In the next section we derive a greedy approach to maximize convergence speed.

	%%%%%%%%%%%%%%%%%%%%%%%%%%%%%%%%%%%%%%%%%%%%%%%%%%%%%%%%%%%%%	
	\subsection{Greedy Policy}
	\label{sec:greedy_policy}
	%%%%%%%%%%%%%%%%%%%%%%%%%%%%%%%%%%%%%%%%%%%%%%%%%%%%%%%%%%%%%
	We now show how to find new paths in each iteration of Algorithm \ref{Alg:generalAlg_prob}, i.e., the function $\mathtt{getNewPath}$. 
	We notice that computing the set of all equivalence regions for a given problem instance is computationally intractable, a proof is provided in Appendix \ref{sec:appendix_hardness}.
	Thus, the set can be of exponential size in relation to the number of constraints. Because of this, we propose a greedy algorithm for finding new paths.
	We define $q(\vect{w}^*\in \Omega|U)$ as the unnormalized posterior, i.e., the numerator of equation \eqref{eq:post_prob_eq_region}. As $q$ is not a probability we refer to it as the \emph{posterior measure}.
	The decrease in the posterior measure is captured as
	\be
	f(X_n)=1-\sum_{\Omega^i\in \mathbb{O}}q(\vect{w}^*\in \Omega^i|U_n).
	\label{eq:submod_fn_prob}
	\ee
	
	Our primary motivating application is one in which the user is ''on-the-loop''. We do not require them to constantly provide feedback and already execute the current solution $P^{\curr}$ \cite{ICRA2018_paper}. Therefore, we keep the current best path and fix $P^{\curr}$ to be one of the two alternative paths comprising the next query (Algorithm \ref{Alg:generalAlg_prob}).
	Thus, our greedy algorithm returns the path maximizing the posterior measure
	\be
	P^{\new}_n = \underset{P^j}{\arg\max}
	\quad
	\E 
	\left[
	f\left(
	\,(P^{\curr},P^j)\bigcup X_{n-1}\,
	\right)
	\right]
	.
	\label{eq:greedy_obj}
	\ee
	
	In this optimization we only need to consider one $P^j$ for each equivalence region.
	In Appendix \ref{sec:appendix_submod} we
	discuss the case where two new paths are presented, i.e., we
	do not fix one path in each query to be $P^{\curr}$. In this case it	can be shown that equation \eqref{eq:greedy_obj} is an adaptive submodular function.
	Finally, we ensure convergence for our greedy approach.
	\begin{lemma}[Convergence of the greedy algorithm]
		The greedy algorithm equation \eqref{eq:greedy_obj} returns an \emph{asymptotically completely informative} sequence of paths if the number of iterations goes to infinity and thus the probability of the true user weight converges to one almost surely.
	\end{lemma}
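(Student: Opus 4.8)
The plan is to deduce the lemma from Proposition~\ref{prop:prob_convergence}. Its second assertion is exactly the conclusion of that proposition applied to the path sequence produced by the greedy rule~\eqref{eq:greedy_obj}, so everything reduces to verifying the hypothesis of Proposition~\ref{prop:prob_convergence}: that the greedy sequence is \emph{asymptotically completely informative} (Definition~\ref{def:informative}). Using the reformulation given after Definition~\ref{def:informative}, together with the remark that it suffices for the feedback to be informative about every equivalence region, the goal becomes the following. For each $\Omega^j\in\mathbb{O}$ with $\Omega^j\ne\Omega^*$, the greedy rule must issue infinitely many queries $(P^a,P^b)$ that \emph{separate} $\Omega^j$ from $\Omega^*$, i.e.\ with $\Omega^*\subseteq\Lambda^{ab}$ and $\Omega^j\subseteq\Lambda^{ba}$ (the first two cases of~\eqref{eq:observation_paths}). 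Such a query exists for every $j$: pairing a representative $P^j$ of $\Omega^j$ with $P^*$ works, since $P^*$ (resp.\ $P^j$) is optimal throughout $\Omega^*$ (resp.\ $\Omega^j$).

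The core argument is by contradiction, accounting for the objective $f$ of~\eqref{eq:submod_fn_prob}. Suppose the greedy sequence is \emph{not} asymptotically completely informative, so for some $\Omega^j$ there is a finite $n_0$ after which no query separates $\Omega^j$ from $\Omega^*$. For $n>n_0$, the candidate query pairing the current path $P^{\curr}$ with a representative $P^j$ of $\Omega^j$ is informative about $\Omega^j$ (since $P^j$ is optimal on $\Omega^j$), and a short computation with~\eqref{eq:observation_paths} --- the posterior measure of any region on which a query is informative shrinks by a constant expected factor, while the measure of every region is nonincreasing in expectation under any query --- shows that its expected objective value exceeds $f(X_{n-1})$ by at least $\kappa\,q(\vect w^*\in\Omega^j|U_{n-1})$, with $\kappa=\kappa(p)>0$. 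Hence the realised expected gain of the greedy step is at least $\kappa\,q(\vect w^*\in\Omega^j|U_{n-1})$; since $f\le 1$ and has monotone expectation, $\sum_n q(\vect w^*\in\Omega^j|U_{n-1})<\infty$, and since $q(\vect w^*\in\Omega^j|U_n)$ decreases along every sample path it converges a.s.\ to $0$ --- the same holding for $q(\vect w^*\in\Omega^*|U_n)$. Both measures vanishing, this is not yet a contradiction, so the real work is to control the ratio $R_j(n)=q(\vect w^*\in\Omega^j|U_n)/q(\vect w^*\in\Omega^*|U_n)$. Past $n_0$, a non-separating query affects $q(\vect w^*\in\Omega^j|\cdot)$ and $q(\vect w^*\in\Omega^*|\cdot)$ either identically (factor $1$) or in a way that cancels in expectation, so $R_j$ is a nonnegative martingale and converges a.s.\ to some $L_j\ge 0$. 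It remains to rule out $L_j>0$: if $L_j>0$, then once the other regions' measures (which \emph{are} separated from $\Omega^*$ infinitely often) have become negligible, $q(\vect w^*\in\Omega^j|U_{n-1})$ is comparable to $q(\vect w^*\in\Omega^*|U_{n-1})$, and a separating query for $\Omega^j$ versus $\Omega^*$ --- whose coefficient on $q(\vect w^*\in\Omega^j|\cdot)$ strictly exceeds that of any non-separating query --- then has strictly larger expected gain than any non-separating query, forcing the greedy rule to issue one; this contradicts the choice of $n_0$. Therefore every $\Omega^j$ is separated from $\Omega^*$ infinitely often, the sequence is asymptotically completely informative, and Proposition~\ref{prop:prob_convergence} finishes the proof.

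I expect the main obstacle to be exactly this last step --- ruling out $L_j>0$, i.e.\ showing the greedy rule actually issues a separating query for $\Omega^j$ versus $\Omega^*$. The comparison of coefficients handles the case $L_j>1$ cleanly; the delicate case $L_j\in(0,1]$ seems to require that $P^{\curr}$ coincides with a representative of $\Omega^*$ infinitely often, which is a recurrence statement for a process whose transitions are themselves selected by the greedy rule from the evolving posterior (a noisy user moves $P^{\curr}$ away from $\Omega^*$ only with probability $1-p^{ij}<\nicefrac{1}{2}$, so $P^{\curr}$ is attracted to $P^*$, but the walk is not a time-homogeneous Markov chain). Should a direct recurrence argument prove unwieldy, I would instead relax the target to the condition actually used in the proof of Proposition~\ref{prop:prob_convergence} --- that $\Pp(\vect w^*\in\Lambda^{1j}|U)\to 1$ for all $j\ne 1$, which with $\bigcap_{j\ne 1}\Lambda^{1j}\subseteq\Omega^*$ yields the claim --- and show directly that the $P^{\curr}$-anchored greedy queries already push these half-space probabilities to one, bypassing the literal subsequences $X^{1j}$ of Definition~\ref{def:informative}. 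A smaller secondary point is the integrability needed for the martingale convergence of $R_j$, which holds because the instance constants $p^{ij}$, $1-p^{ij}$, $\nicefrac{1}{2}$ are bounded away from $0$ and $1$; the borderline deterministic case $p^{ij}\in\{0,1\}$ is already subsumed by the remark following~\eqref{eq:post_prob_eq_region}.
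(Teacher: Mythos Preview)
Your route is genuinely different from the paper's. The paper argues directly in two stages rather than by contradiction and martingales. First, it shows that the greedy rule eventually sets $P^{\curr}=P^*$: while $P^{\curr}\ne P^*$, every query either yields information about $\Omega^*$ or decreases the posterior of $\Omega^{\curr}$ or $\Omega^j$ relative to $\Omega^*$, so the expected marginal reward of presenting $P^*$ rises monotonically relative to every other candidate until $P^*$ becomes the greedy maximizer. Second, once $P^{\curr}=P^*$ (which the paper argues recurs even after erroneous feedback moves $P^{\curr}$ away), it looks at the region $\Omega^j$ with currently minimal $n_j$ and does a short case split on~\eqref{eq:observation_paths}: either the greedy query is informative about $\Omega^j$ and $n_j$ increments, or $q(\vect w^*\in\Omega^j|U)$ grows by the fixed ratio $\nicefrac{0.5}{(1-p)}$ relative to every region that \emph{is} rejected, so after finitely many steps one of the informative cases must apply. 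Thus every $n_j\to\infty$. This is more elementary than your proposal and, crucially, it tackles head-on the recurrence of $P^{\curr}=P^*$ that you yourself identify as the main obstacle; your plan tries to route around that recurrence, whereas the paper simply proves it.

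There is also a concrete gap in your martingale step. A non-separating query can still be informative about exactly one of $\Omega^j,\Omega^*$: take any $(P^a,P^b)$ whose hyperplane misses $\Omega^j$ but cuts $\Omega^*$ (third case of~\eqref{eq:observation_paths}). Then $q(\vect w^*\in\Omega^j|\cdot)$ is multiplied by $p$ or $1-p$ while $q(\vect w^*\in\Omega^*|\cdot)$ is multiplied by $\nicefrac{1}{2}$, and if $\vect w^*\in\Lambda^{ab}$ one computes $\E[R_j(n)\mid\mathcal F_{n-1}]=2\bigl(p^2+(1-p)^2\bigr)R_j(n-1)>R_j(n-1)$, so $R_j$ is a strict submartingale in that regime, not a martingale; the ``cancels in expectation'' claim fails. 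Nonnegative submartingales need not converge, so ``$R_j$ converges a.s.\ to some $L_j$'' is unjustified as written, and the subsequent dichotomy on $L_j$ loses its footing. Your fallback---working with the half-space posteriors $\Pp(\vect w^*\in\Lambda^{1j}|U)$ from the proof of Proposition~\ref{prop:prob_convergence}---still needs $P^*$ to appear in the queries (Definition~\ref{def:informative} is stated with $P^1=P^*$ as one endpoint), which again is exactly the recurrence statement the paper establishes directly in its first stage.
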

	\begin{proof}
		To prove the statement we show two properties: 1) The greedy algorithm eventually returns $P^*$ and 2) if $P^{\curr}=P^*$ it eventually returns all paths necessary to constitute an \emph{asymptotically completely informative} sequence.
		To show the first statement let $P^{\curr}\neq P^*$. 
		If a path $P^j\neq P^*$ is returned we either do not learn about $P^*$ and the posterior of either $\Omega^{\curr}$ or $\Omega^j$ decreases relatively to the posterior of $\Omega^*$ (see equation \eqref{eq:observation_paths}). Otherwise, the comparison of $(P^{\curr},P^j)$ contains information about $\Omega^*$ and thus is expected to increase the posterior of $\Omega^*$.
		While $P^{\curr}\neq P^*$, the expected marginal reward of presenting $P^*$, i.e., $\E 	\left[	f\left(	\,(P^{\curr},P^*)\bigcup X_{n-1}\,
		\right)
		\right] -f(X_{n-1})$, increases monotonically, relatively to the reward of any other path. Thus, $P^*$ will eventually be the maximizer of equation \eqref{eq:greedy_obj}. Then, the greedy algorithm returns $P^*$ and the user will prefer $P^*$ over the current $P^{\curr}$ in expectation.
		
		For the second statement, assume we already have $P^{\curr}=P^*$. Due to inaccurate user feedback another path $P^i$ becomes $P^{\curr}$. However, as shown above, the algorithm eventually presents $P^*$ again.	
		Consider the path $P^j$ where $n_j$ is minimal among all paths  ($n_j$ is defined in Definition \ref{def:informative}). Case 1: $q(\vect{w}^*\in \Omega^j|U)$ is the maximizer of equation \eqref{eq:greedy_obj}, thus $P^j$ is presented and $n_j$ increments. Case 2: Some $q(\vect{w}^*\in \Omega^r|U)$ is the maximizer and $(P^{\curr}, P^r)$ is presented. If the corresponding feedback contains information about $P^j$, $n_j$ increments as well. According to equation \eqref{eq:observation_paths}, if no information about $P^j$ is obtained, $q(\vect{w}^*\in \Omega^j|U)$ increases by a ratio of $\nicefrac{0.5}{(1-p)}$ (where $p$ is the user accuracy) relative to all $q(\vect{w}^*\in \Omega'|U)$, where $\Omega'$ gets rejected by the feedback to $(P^{\curr}, P^r)$. As this holds for all other paths and the number of paths is finite, $q(\vect{w}^*\in \Omega^j|U)$ increases relative to all other posterior measures until, after a finite number of iterations, either information about $P^j$ is obtained and $n_j$ increments, or case 1 applies. Hence, we are guaranteed to increment the minimal $n_j$ and thus all $n_j$ must go to infinity.
	\end{proof}
	
	Performing an exact greedy step is hard, as finding the set of all equivalence regions $\mathbb{O}$ is intractable. In practice, a polynomial sized estimate $\mathbb{O}'$ can be found via sampling which allows for an approximate greedy step.

	%%%%%%%%%%%%%%%%%%%%%%%%%%%%%%%%%%%%%%%%%%%%%%%%%%%%%%%%%%%%%	
	\section{Evaluation}
	\label{sec:eval}
	%%%%%%%%%%%%%%%%%%%%%%%%%%%%%%%%%%%%%%%%%%%%%%%%%%%%%%%%%%%%%	
	To generate realistic simulations, we recruited users to create specifications. Given the layout of a real industrial facility, users defined constraints as described in \cite{Alex_GUI}. To systematically evaluate our approach, we simulate user feedback in the active learning. This allows us to pick different ground truths $\vect{w}^*$ and generate the user feedback with varying accuracy levels. Figure \ref{fig:example_spec} illustrates an example specification with different possible solutions.
	Further, for an outdoor scenario we conducted experiments using graphs generated by a probabilistic method \cite{Planning_LaValle} and random specifications.	

	Our primary interest is how the posterior belief about $\vect{w}^*$ evolves. In the evaluation we have two objectives: Showing the robustness of our user model and comparing our work with \cite{dragan_orig}. We refer to their approach as the \emph{Maximum Volume Removal (MRV)}  and name our approach the \emph{Maximum Equivalence Region Removal (MERR)} \footnote{Note: In both approaches neither volume nor equivalence regions are removed, we rather assign a lower posterior probability to the rejected items.}.
	
	In our implementation of MRV, we modify the query selection: As we consider a discrete space, queries are found by iterating over $\mathbb{O}'$ rather then solving a continuous optimization problem. To ensure comparability with our framework, we fix one path in the pair that comprises a query as the current path. Moreover, MRV requires a scaling of the features. The user's accuracy is modelled as an exponential function of the difference in the cost of two paths \cite{dragan_orig}. Thus the user's accuracy depends on the scaling of the cost function.
	The model is extended by a linear parameter $\beta$ in \cite{dragan2} to describe different levels of accuracy. However, as no restriction on the scale of the features is made, these $\beta$ values do not yield similar results for different scenarios. In our experiments we manually determined $\beta$ for each scenario such that the user's accuracy is approximately $0.9$.		
	
	To investigate the performance we used three different specifications that vary in complexity. The first specification consists of 26 constraints, covering $33\%$ of the free space, the second (shown in Figure \ref{fig:example_spec}) has 41 constraints covering $40\%$ while the third consists of 52 constraints covering $73\%$\footnote{When a user specifies a road on the interface it counts as 2 constraints for the planner: A reward for following the road, i.e., a constraint with a negative weight, and a penalty for going against the direction of travel.}.
	For each specification we varied between two start and goal pairs and three randomly selected true user weights $\vect{w}^*$. As finding the set of all equivalence regions $\mathbb{O}$ is intractable, we generate estimates $\mathbb{O}'$ via sampling. 
	The graph $G'$ for these experiments is based on a grid layout and has of $5185$ vertices.
	MERR and MVR differ in three components: The model for the simulated user feedback, the user model assumed by the learning system and the strategy for presenting new paths, i.e., the query selection. 
	
	%%%%%%%%%%%%%%%%%%%%%%%%%%%%%%%%%%%%%%%%%%%%%%%%%%%%%%%%%%%%%	
	\subsection{Performance of MERR and MVR query selections}
	%%%%%%%%%%%%%%%%%%%%%%%%%%%%%%%%%%%%%%%%%%%%%%%%%%%%%%%%%%%%%
	\paragraph{Experiment 1} 
	
	In the first experiment, we compare the performance for the different active query selections - either MVR or MERR - for a user following the MVR model.
	Figure \ref{fig:Exp1} shows the result for a total of 180 trials (10 repetitions for each configuration of user, start-goal pair and true user weight) with a budget of 30 iterations.
	
	\begin{figure} 
		\centering		
		\includegraphics[width=0.485\textwidth]{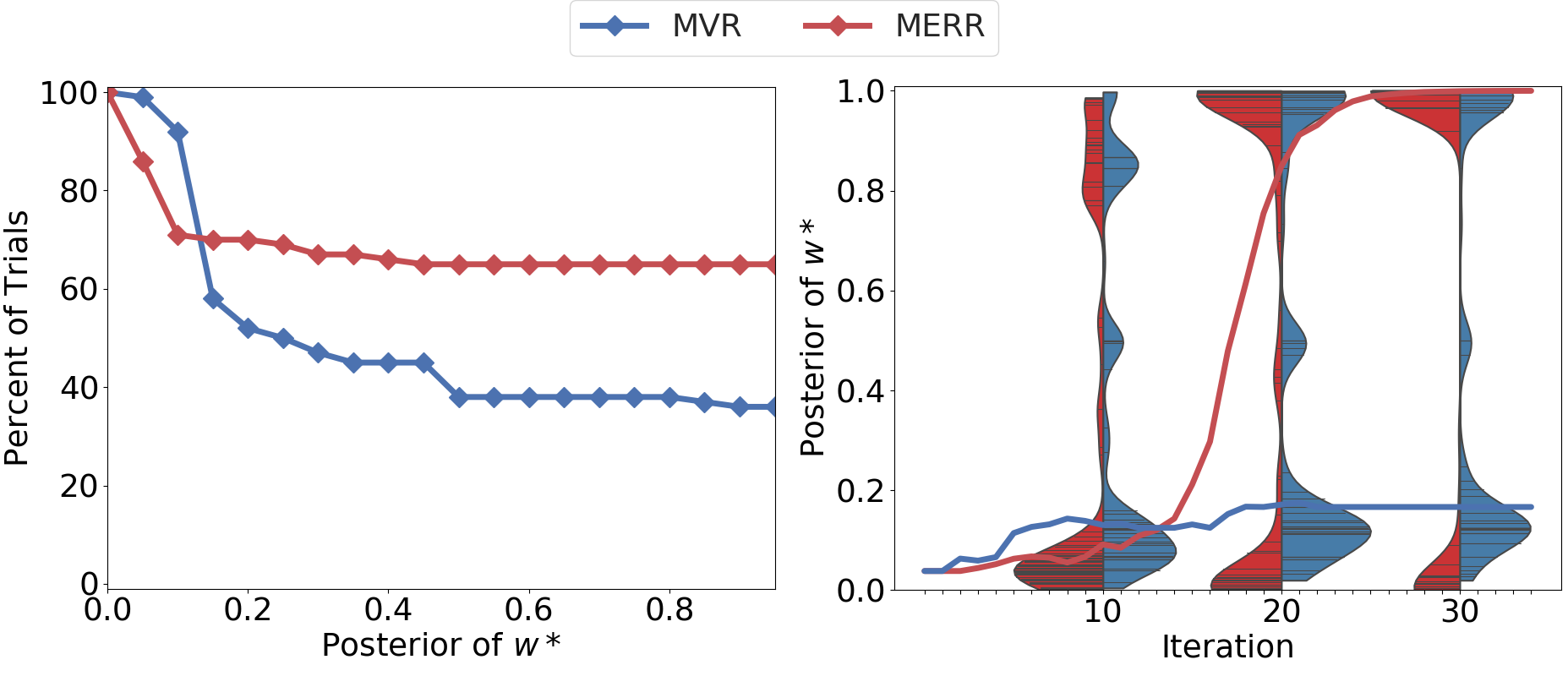}
		\caption{
			Results for experiment 1. For different values of the posterior of the optimal weight $\vect{w}^*$ the left plot shows the percentage of trials that achieved that value within $30$ iterations. In the right plot we show median values of the posterior over all $30$ iterations together with violin plots of the distribution of the posterior of $\vect{w}^*$ at iterations 10, 20 and 30.
		}
		\label{fig:Exp1} 
	\end{figure}
	The left plot of Figure \ref{fig:Exp1} illustrates the percentage of trials that achieved a given final posterior value for the true user weight $w^*$ within the iteration budget. A critical threshold for the posterior is $0.5$, 
	as then $\vect{\hat{w}}^{\best}=\vect{w}^*$ and $\vect{\hat{w}}^{\best}$ is the unique maximizer of the posterior distribution.
	The MERR query selection has a higher success rate: $70\%$ of the trials converge within $30$ iterations, while only $40\%$ do so for MVR queries. Interestingly, both approaches always converge once the posterior of $\vect{w}^*$ surpasses $0.5$.
	In the right sub plot we illustrate the evolution of the median posterior of $\vect{w}^*$ over the iterations. Further, at iterations 10, 20 and 30 we show the distribution of the data.
	We observe that between iterations 10 and 20 the MERR posterior median starts to increase quickly, passing the $0.5$ threshold at iteration 17 and getting past $0.9$ after 21 iterations. MVR shows a slower increase and does not pass $0.2$ within the 30 iterations. The violin plots at three stages of the process illustrate a further detail: The distributions are nearly bimodal. Both approaches succeed for some instances very quickly while the posterior stays low for harder instances. However, we observe that the low end of the distribution shrinks more quickly for MERR and eventually becomes completely bimodal. In some hard instances, the algorithm takes longer to initially show $P^*$ to the user and does not learn about $\vect{w}^*$ until then. However, once $P^*$ is shown, the belief about $\vect{w}^*$ is maximized quickly, leading to the bimodal distribution.	
	
	To explain the better performance of MERR, we recall that equivalence regions vary drastically in volume. MVR often proposes queries that reduce the integral of the posterior but do not significantly change the posterior of equivalence regions and thus makes little progress.
	
	\begin{figure} 
		\centering
		\includegraphics[width=0.485\textwidth]{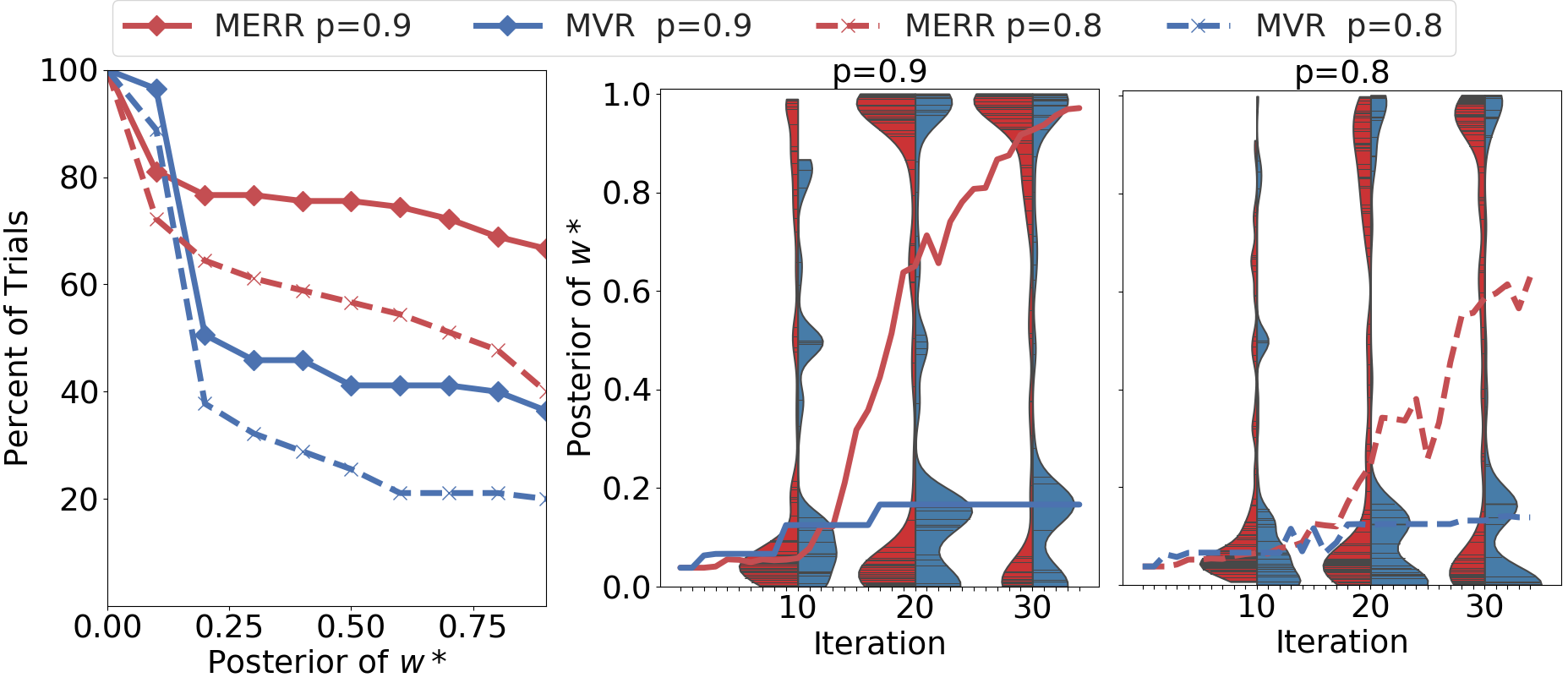}
		\caption{
			Results for experiment 2. The same analysis as in Figure \ref{fig:Exp1} is shown, but for the MERR user two different values for accuracy, $p=0.9$ and $p=0.8$, are depicted.}
		\label{fig:Exp2} 
	\end{figure}
	
	%%%%%%%%%%%%%%%%%%%%%%%%%%%%%%%%%%%%%%%%%%%%%%%%%%%%%%%%%%%%%	
	\paragraph{Experiment 2}The second experiment focuses on the performance of both query selection methods, assuming the user generates responses according to the MERR model, i.e., equation \eqref{eq:observation_paths}. In Experiment 1 we simulated the user according to \cite{dragan_orig, dragan2}. In that model, the accuracy depends on how different the presented paths are and therefore is influenced by the query selection. In order to ensure comparability with our user model, we fixed the accuracy to the average of Experiment 1, thus $p=0.9$. Additionally, a second dataset shows the results for lower accuracies of $p=0.8$. Notice that the range for meaningful values is $(0.5, 1]$; users with $p=0.5$ act completely independently of our model. Further, $p=0.9$ corresponds to $10\%$ misleading feedback, $p=0.8$ doubles the error rate to $20\%$ of cases. In Figure \ref{fig:Exp2} we summarize the data as done for Experiment 1.
	
	In contrast to Experiment 1, both query selection methods achieve a lower convergence rate for $p=0.9$. 
	MERR reaches a posterior median of $0.6$ in $80\%$ of the trials, while with MVR less than $45\%$ of trials reach the $0.5$ threshold.
	For the lower accuracy of $p=0.8$, both query selection methods perform worse: MERR converges only in $42\%$ and MVR $20\%$ of cases.
	The graphs illustrating the mean posteriors over the iterations show a similar result for $p=0.9$ compared to Experiment 1. For $p=0.8$, the median of the posterior for MERR makes substantially better progress than MVR. The distributions confirm this observation: MERR shrinks the bottom lobe and gains on the upper end more quickly.
	
	In summary, the first two experiments highlight the performance benefit when maximizing the decrease in the posterior summed over equivalence regions compared to the decrease in the integral of the posterior (i.e., the removed volume), irrespective of the user model. 
	
	%%%%%%%%%%%%%%%%%%%%%%%%%%%%%%%%%%%%%%%%%%%%%%%%%%%%%%%%%%%%%	
	\subsection{Robustness of MERR}
	%%%%%%%%%%%%%%%%%%%%%%%%%%%%%%%%%%%%%%%%%%%%%%%%%%%%%%%%%%%%%	
	We further investigate how sensitive the proposed approach is to knowledge of the user's accuracy. We simulate the user according to the MERR model with a constant accuracy $p$, but the learning system only has access to an estimate $\hat{p}$.
	We fixed $p=0.7$ and $p=0.85$ and picked either $\hat{p}=p$, or $\hat{p} = p\pm0.1$. The experiment is based on the smallest and largest specifications with 26 and 52 constraints, respectively. For each $p,\hat{p}$ configuration we average over 20 trials.
	
	\begin{figure} 
		\centering		
		\includegraphics[width=0.49\textwidth]{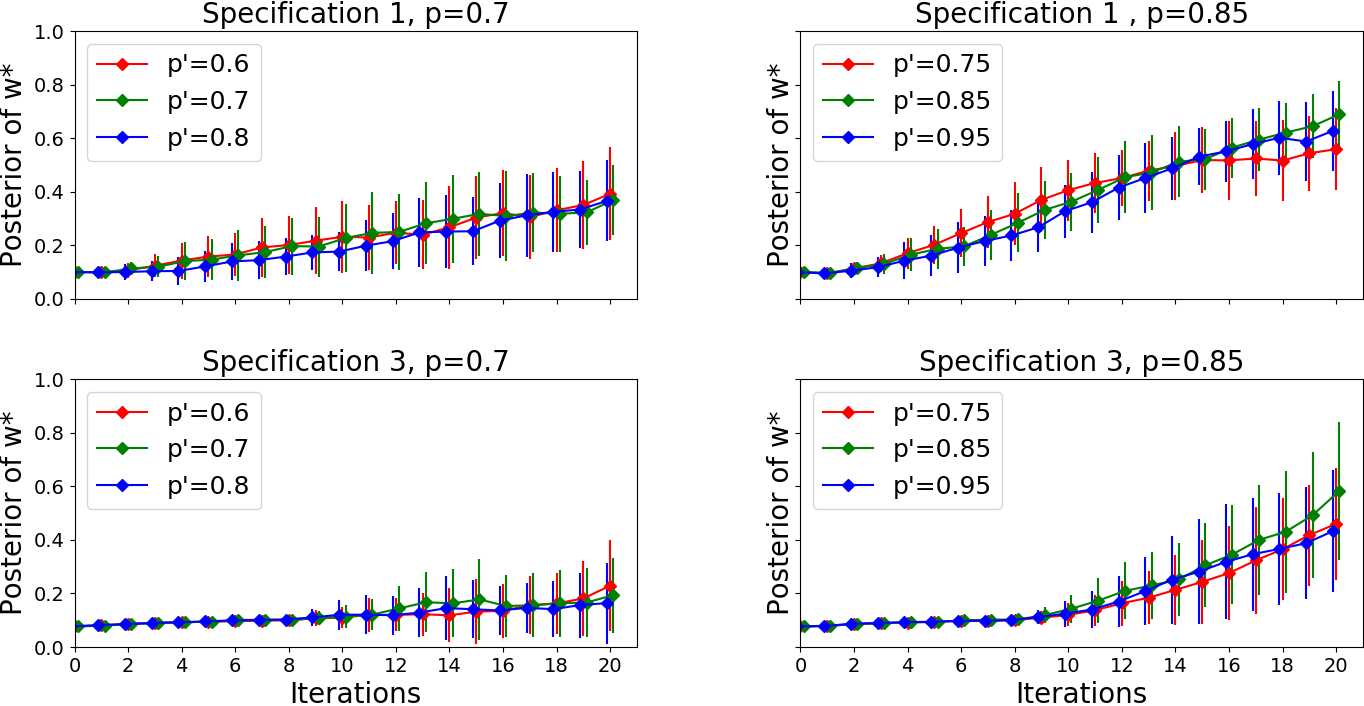}
		\caption{Experiment 3. Robustness of the MERR user model for two specification (26 and 52 constraints) and different accuracies $p$ in the simulated user and different estimates  $p'$ for the learning.}
		\label{fig:Exp3} 
	\end{figure}
	
	Figure \ref{fig:Exp3} shows	that an estimation error of $0.1$ for the user accuracy has very little influence on the performance. In all 4 plots the over- and underestimates behave similarly to when the learner knows the user's accuracy exactly. As expected, the accuracy itself has an impact on the performance: For $p=0.7$ the learning system performs worse for both specifications. Especially for the large specification, there is only little progress over the 20 iterations. In a more complicated setting additional feedback is needed to elicit the true user weight, the higher amount of inaccurate feedback then has a larger impact.
	On the other hand, for $p=0.85$ the final result is relatively similar for both specifications. We conclude that a richer specification has a smaller impact on the performance when the user feedback is more accurate.
	
	%%%%%%%%%%%%%%%%%%%%%%%%%%%%%%%%%%%%%%%%%%%%%%%%%%%%%%%%%%%%%	
	\subsection{Extension to other scenarios}
	%%%%%%%%%%%%%%%%%%%%%%%%%%%%%%%%%%%%%%%%%%%%%%%%%%%%%%%%%%%%%
	Finally, we applied the approach to a different setting: An environment described by a graph based on a $k$-nearest probabilistic roadmap (PRM) \cite{Planning_LaValle}. User specifications are generated randomly by sampling polygons in the environment; for each region a constraint is formulated over all edges incident with a vertex in the region. Using the layout of the campus of the University of Waterloo, we generated a PRM graph with $2000$ vertices and $k=10$, the number of sampled constraints is $50$. In Figure \ref{fig:Exp4} we show the map together with the generated graph. Further, we compare the initial path of the learning which does not violate any constraints, and the user optimal path $P^*$ that is learned through interaction.
	We conducted a similar analysis as in Experiments 1 and 2, averaged for 20 different specifications. Overall $50\%$ of the trials achieve a posterior of at least $0.9$ within 50 iterations. Moreover, after $15$ iterations the median passes the $0.5$ threshold, while $0.9$ is reached after $40$ interactions.

	\begin{figure} 
		\centering	
		\subfloat[]{%
			\includegraphics[width=0.2\textwidth]{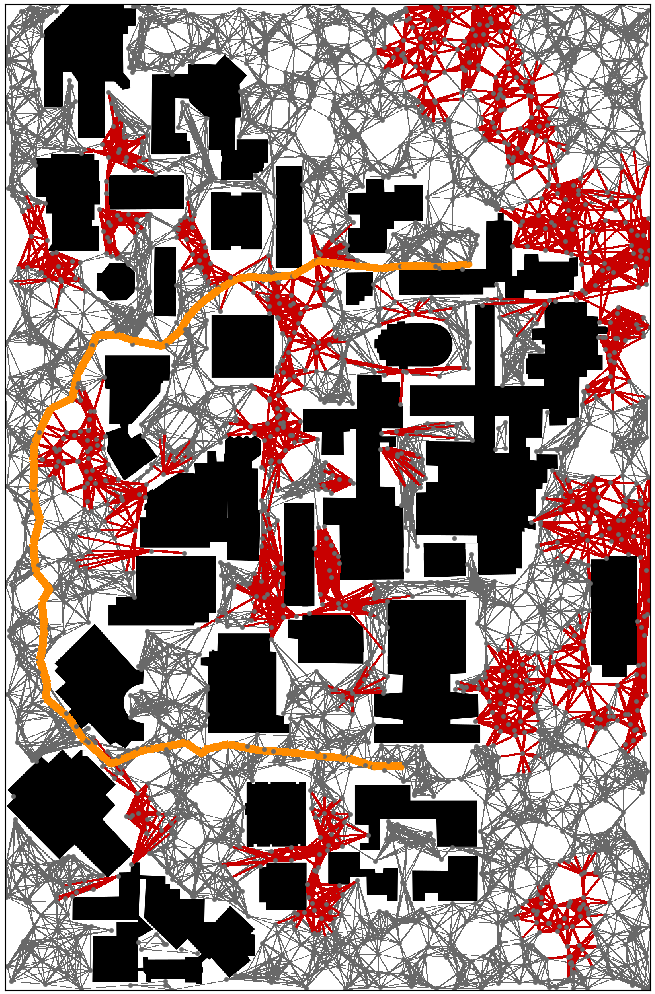}}\hfill
		\subfloat[]{%
			\includegraphics[width=0.2\textwidth]{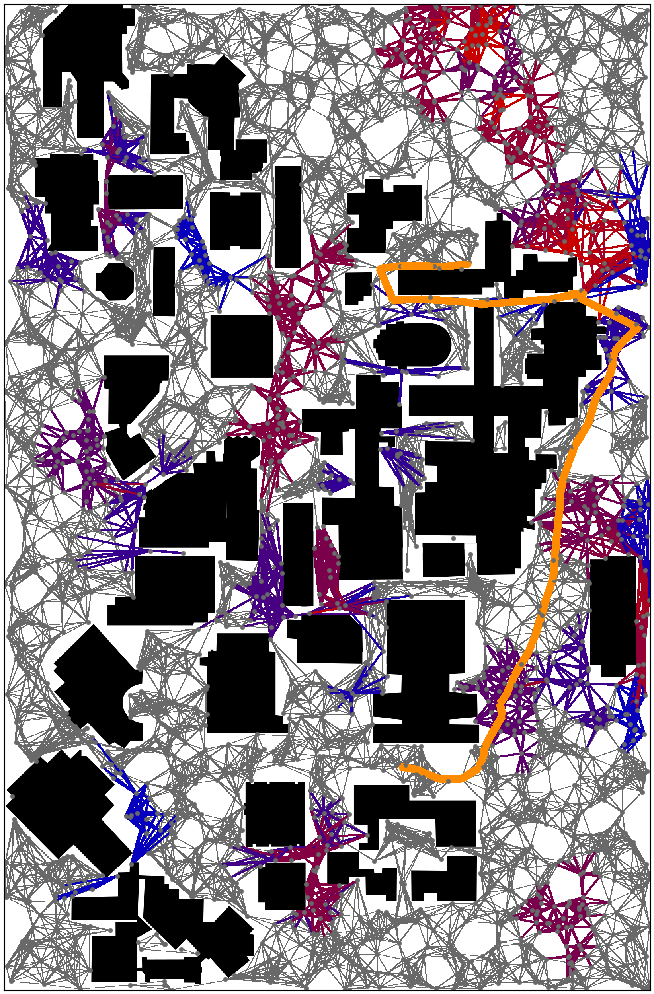}}
		\caption{
			Results for Experiment 4. (a) shows the outdoor environment (buildings in black and freespace in white) with the generated PRM-graph. Red indicates edges that belong to a randomly generated constraint, orange shows the optimal path between a start and goal location of a task when not violating any constraints. (b) shows the optimal path $P^*$ and the updated weights on the constraints -- red indicates high and blue low weights.}
		\label{fig:Exp4} 
	\end{figure}
	%%%%%%%%%%%%%%%%%%%%%%%%%%%%%%%%%%%%%%%%%%%%%%%%%%%%%%%%%%%%%	
	\section{Discussion}
	\label{sec:discussion}
	%%%%%%%%%%%%%%%%%%%%%%%%%%%%%%%%%%%%%%%%%%%%%%%%%%%%%%%%%%%%%	
	\paragraph{User models}
	Generally, both models, MVR and MERR, assume that the user evaluates paths based on a weighted sum of features. In MVR the user's accuracy depends on how similar the presented paths are. This approach is well motivated and promises good performance when the features are adequate. On the other hand, it has disadvantages with the scaling of features and lacks robustness when users do not follow the model. In contrast, our approach generally models inaccuracies as a random noise and is agnostic towards their exact form. In the simulations we fixed $p$ to a constant and demonstrated the robustness. 
	Therefore, our learning system is less dependent on our user model exactly capturing the real user behaviour.
	A limitation of the MERR user model with constant accuracy values is that accurate user feedback is potentially not exploited efficiently as the noise then is query-independent. This approach is investigated in \cite{CLAUS}. Moreover, both learning models depend on sampling to perform the greedy step. Even though the accuracy can be increased arbitrarily with more samples, finding the optimal solution is computationally intractable.
	%%%%%%%%%%%%%%%%%%%%%%%%%%%%%%%%%%%%%%%%%%%%%%%%%%%%%%%%%%%%%
	\paragraph{Performance in experiments}
	In the first two experiments we showed data that was collected for different user weights and different user specifications (and thus features). Both have a direct influence on the algorithm's performance. Usually, more complex specifications have a larger set of equivalence regions, which affects the convergence. Maybe surprisingly, the true user preference $\vect{w}^*$ can also influence the performance, especially in the MERR model: The learning system makes little progress if most of the hyperplanes learned from a sequence of user feedback intersect $\Omega^*$.
	Moreover, in single trials both models can perform relatively poorly by random chance. An inaccurate user feedback for a query containing $P^*$ leads to decrease in the posterior of $\vect{w}^*$. Then, the learning system might need multiple iterations to present $P^*$ again and thus elicit $\vect{w}^*$. This effect is enhanced when the accuracy of the user is low. Nonetheless, the MERR learning model is still guaranteed to converge.

	%%%%%%%%%%%%%%%%%%%%%%%%%%%%%%%%%%%%%%%%%%%%%%%%%%%%%%%%%%%%%	
	\section{Conclusions and Future Work}
	%%%%%%%%%%%%%%%%%%%%%%%%%%%%%%%%%%%%%%%%%%%%%%%%%%%%%%%%%%%%%	
	We presented an interactive framework for robot task specification. Based on Bayesian active leaning we derived a greedy algorithm for generating queries.
	Our approach exploits the fact that different weights for
	constraints do not necessarily lead to different optimal paths. Using equivalence regions allows for a discrete Bayesian
	learning model that does not require the user to always
	provide feedback consistent with the assumed cost function. The probability of an inconsistent user feedback is of a general form and scale-invariant. In simulations, we demonstrated that our approach outperforms a related state-of-the-art technique \cite{dragan_orig} and showed robustness of our user model.
	One future work direction is to extend the concept of equivalence regions to continuous spaces by introducing a notion of path similarity.
	Further, the proposed framework can be applied to more complex and realistic scenarios including multiple start and goal locations and additional features, potentially including dynamic data such as traffic.
	Finally, user studies are required to show the practical performance of the framework. 
	
	%%%%%%%%%%%%%%%%%%%%%%%%%%%%%%%%%%%%%%%%%%%%%%%%%%%%%%%%%%%%%
	\appendix
	%%%%%%%%%%%%%%%%%%%%%%%%%%%%%%%%%%%%%%%%%%%%%%%%%%%%%%%%%%%%%
	\subsection{Hardness of finding all non-equivalent paths}
	\label{sec:appendix_hardness}
	%%%%%%%%%%%%%%%%%%%%%%%%%%%%%%%%%%%%%%%%%%%%%%%%%%%%%%%%%%%%%
	
	\begin{proposition}[Hardness of finding all paths]
		\label{prop:allPaths}
		Finding the number of non-\emph{equivalent} paths between $v_{\mathrm{start}}$ and $v_{\mathrm{goal}}$ on the graph $G$ is $\#$P hard.
	\end{proposition}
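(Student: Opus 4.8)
The plan is to reduce from a known $\#$P-hard counting problem — the most natural candidate being $\#$\textsc{Paths} (counting the number of simple $s$--$t$ paths in a directed graph), which is $\#$P-complete by Valiant's classical result. The key idea is that on a graph with \emph{no} user constraints (so the second weight $w^*$ is identically zero, or equivalently all constraint weights are set to zero), two weights are trivially equivalent, but by attaching one constraint per edge we can make the equivalence regions correspond exactly to distinct paths. So first I would take an arbitrary instance $(H, s, t)$ of $\#$\textsc{Paths} and construct a graph $G'$ whose vertex and edge set mirror $H$, assigning the time weight $t(e)$ on each edge so that the relevant $s$--$t$ paths all have comparable (or carefully ordered) times; the cleanest route is to subdivide or pad edges so that every simple $s$--$t$ path in $H$ has the same total time, ruling out the degenerate collapse where a single fastest path dominates all weightings.

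Second, I would define the user specification $\Gamma$ by placing a distinct single-edge constraint $(\{e\}, w_e^*)$ on each edge $e$, so that the weight space is $[0,w^{\max}]^{|E|}$ and the violation vector $\vect\phi$ of a path is precisely its edge-indicator vector. Then the cost of a path $P$ at weight $\vect w$ is $\sum_{e\in P} w_e + t(P)$, and — since all the simple $s$--$t$ paths share the same $t(P)$ — the optimal path at a generic weight $\vect w$ is the one minimizing $\sum_{e\in P} w_e$. The third and central step is to argue that the map sending a simple $s$--$t$ path $P$ to the region of weights where $P$ is the unique shortest path is a region with nonempty interior: given $P$, choose $w_e$ small for $e\in P$ and large for $e\notin P$; then any other $s$--$t$ path $Q$ (which must use at least one edge outside $P$, being distinct) has strictly larger constraint-weight sum, hence strictly larger cost. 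This shows each simple $s$--$t$ path occupies its own equivalence region, so the number of equivalence regions — equivalently the number of non-equivalent paths — is at least the number of simple $s$--$t$ paths in $H$. For the reverse inequality, note that the optimal path at any weight is always a simple $s$--$t$ path (a non-simple walk can only increase a nonnegative cost), so the count of non-equivalent paths is \emph{exactly} $\#$\textsc{Paths}$(H)$, and a polynomial-time oracle for the former solves the latter.

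The main obstacle I anticipate is handling the \emph{ties}: the definition of equivalence region partitions the weight space, but on the measure-zero boundary hyperplanes several paths are simultaneously optimal, and one must be careful that the "number of non-equivalent paths" is well defined and equals the number of full-dimensional cells rather than something sensitive to tie-breaking. I would address this by fixing a deterministic tie-breaking rule in the black-box planner (as the paper already assumes a deterministic planner) and observing that the count is insensitive to the rule because each simple $s$--$t$ path is the \emph{unique} optimum on an open set, so it is counted regardless. A secondary technical point is ensuring the equal-time padding does not blow up the instance size super-polynomially and does not itself introduce new simple $s$--$t$ paths — subdividing each edge into a fixed number of unit-time segments (padding shorter paths to the length of the longest, which is at most $|V|$) keeps everything polynomial and creates no new routing choices, since subdivision vertices have in- and out-degree one. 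Once these bookkeeping issues are settled, the reduction is immediate and the $\#$P-hardness follows.
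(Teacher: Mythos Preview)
Your reduction is essentially the paper's: reduce from Valiant's $\#$\textsc{s--t Paths}, place one single-edge constraint on every edge so that a path's violation vector is its edge indicator, and show that each simple $s$--$t$ path $P$ owns a nonempty equivalence region by setting weights low on $P$ and high off $P$. The paper's proof is exactly your third step, and your treatment of ties and of the reverse inequality is fine.

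The one substantive issue is the equal-time padding. It is unnecessary: without it, simply take ``large'' to mean at least $\sum_{e\in E} t(e)$; then any $Q\neq P$ uses some edge outside $P$ and has cost at least $t(Q)+\sum_e t(e) > t(P)$, so $P$ is uniquely optimal regardless of time differences. This is precisely the paper's choice $w_j^* = \sum_k t_k$ for $e_j\notin P$ and $w_i^*=0$ for $e_i\in P$. More to the point, the padding you describe does not work. Subdividing every edge into $k$ unit segments multiplies \emph{all} path times by $k$ and equalizes nothing; and in general no assignment of positive edge times can force all simple $s$--$t$ paths to have the same total time---already on a four-vertex digraph with both $u\to v$ and $v\to u$ present between the neighbours of $s$ and $t$, the equal-time constraints force one of those two edge times to be negative. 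Drop the padding, quantify ``large'' as above, and your argument coincides with the paper's verbatim.
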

	\begin{proof}
		To prove the statement we reduce the S-T-paths problem to our problem. The S-T paths problem finds the number of all paths from a start to a goal vertex on an unweighted graph and is known to be \#P complete \cite{complexitiy_sharpP}. Let $(G^{ST}, S, T)$ be an instance of S-T-paths where $G^{ST}=(V^{ST},E^{ST},\Psi^{ST})$ is a non-weighted graph without parallel edges and $\Psi^{ST}$ maps a pair of vertices to each edge. We construct an instance of our problem where $G'=(V,E,\Psi)$ with $V=V^{ST}$, $E=E^{ST}$, $\Psi=\Psi^{ST}$ and $v_{\mathrm{start}}=S$ and $v_{\mathrm{goal}}=T$. We choose $t_i$ for all $e_i\in E$ such that $G'$ is metric.
		\iffalse
		\footnote{Note: This is not absolutely necessary; theoretically, we could choose $t_i=0$ for all $e_i\in E$. However, as $G'$ represents a robotic road map, this is a more natural way of defining $t_i$.}.
		\fi
		We then pick a user specification consisting of $|E|$ user constraints, each defining a weight for exactly one edge such that each edge $e_i$ in $E$ is associated with a single hidden weight $w^*_i$. We obtain a doubly weighted graph $G=(V,E,\Psi, t, w^*)$.
		Every path on $G^{ST}$ then has a corresponding path on  $G$ and vice versa.
		Moreover, any path $P$ between $v_{\mathrm{start}}$ and $v_{\mathrm{goal}}$ on $G$ is a shortest path for some realization of all $w_i$ as we can choose 
		$w^*_i=0 $ if $ e_i\in P$ and $ w^*_j=\sum_{k}t_k$ for all $e_j\notin P$ and $k=1,\dots, |E|$ otherwise. Hence, the number of \emph{equivalence} regions of our problem equals the number of all paths on $G$, which corresponds to the number of S-T paths on $G^{ST}$. We conclude that a solution to finding all equivalent paths solves S-T paths.
	\end{proof}
	The complexity class of $\#$P includes problems such as counting the number of solutions of NP-hard problems; however, even for many problems solvable in polynomial time, counting the solutions is nonetheless $\#$P hard \cite{complexitiy_sharpP}. If a polynomial time algorithm for solving a $\#$P hard problem exists, it would imply P $=$ NP.
	
	%%%%%%%%%%%%%%%%%%%%%%%%%%%%%%%%%%%%%%%%%%%%%%%%%%%%%%%%%%%%
	\subsection{Discussion of adaptive submodularity}
	\label{sec:appendix_submod}
	%%%%%%%%%%%%%%%%%%%%%%%%%%%%%%%%%%%%%%%%%%%%%%%%%%%%%%%%%%%%%	
	In Section \ref{sec:greedy_policy} we proposed a greedy algorithm that maximizes the reduction of an unnormalized posterior. The objective of the algorithm is related to adaptive submodular functions, introduced in \cite{adaptive_submod}.
	A similar approach is presented in the active learning framework of \cite{dragan_orig}, where the objective is the reduction of the unnormalized integrated posterior of the weight space, referred to as the removed volume.
	The authors show that this volume removal function is adaptive submodular.
	In contrast, equation (9) sums over the posterior measure of all equivalence regions. This indicates how the belief over paths changes instead of over all weights.
	When $P^i$ is not fixed to be $P^{\curr}$, our greedy objective function $f(X_n)$ can also be shown to be normalized, adaptive monotone, and adaptive submodular. Adaptive monotonicity follows from $q(\vect{w}^*\in \Omega|U)$ being multiplied with $p^{ij}$, $1-p^{ij}$ or $\nicefrac{1}{2}$ when user feedback is observed. Further, adaptive submodularity follows since the marginal reward of an element $X^{ij}$, as defined in \cite{adaptive_submod}, is smaller for a set $X_m$ than for a set $X_n$, where $|X_m|\geq|X_n$ as the decrease in the posterior measure has an upper bound of  $q(\vect{w}^*\in \Omega^i|U_{n},U^{ij})$. Adaptive submularity provides strong performance guarantees for a greedy approach: At any given iteration, the greedy solution achieves $1-\nicefrac{1}{e}\approx 0.63$ times the optimal solution and is the best polynomial time approximation \cite{adaptive_submod}.	
			
	\bibliographystyle{IEEEconf}

\end{document}